\documentclass{article}

\usepackage[preprint]{neurips_2026}

\usepackage[utf8]{inputenc} 
\usepackage[T1]{fontenc}    
\usepackage{hyperref}       
\usepackage{url}            
\usepackage{booktabs}       
\usepackage{amsfonts}       
\usepackage{nicefrac}       
\usepackage{microtype}      
\usepackage{xcolor}         
\usepackage{amsmath}
\usepackage{amssymb}
\usepackage{mathtools}
\usepackage{amsthm}
\usepackage{comment}
\usepackage{enumitem}
\usepackage[capitalize,noabbrev]{cleveref}
\usepackage{thmtools,thm-restate}
\usepackage{algorithm}
\usepackage{algorithmic}
\usepackage{multirow}

\usepackage{pifont}
\newcommand{\cmark}{\ding{51}}
\newcommand{\xmark}{\ding{55}}

\theoremstyle{plain}
\newtheorem{theorem}{Theorem}
\newtheorem{proposition}[theorem]{Proposition}

\theoremstyle{definition}

\theoremstyle{remark}

\title{Rethinking Importance Sampling in LLM Policy Optimization: A Cumulative Token Perspective}

\author{%
  Yuheng Zhang \thanks{Email: \texttt{yuhengz2@illinois.edu}.} \\
  UIUC \\
  \And
  Chenlu Ye \\
  UIUC \\
  \And
  Shuowei Jin \\
  University of Michigan \\
  \And
  Changlong Yu \\
  Amazon \\
  \AND 
  Wei Xiong \\
  UIUC \\
  \And
  Saurabh Sahu \\
  Amazon \\
  \And
  Nan Jiang \\
  UIUC \\
}

\begin{document}

\maketitle

\begin{abstract}
Reinforcement learning, including reinforcement learning with verifiable rewards (RLVR), has emerged as a powerful approach for LLM post-training. Central to these approaches is the design of the importance sampling (IS) ratio used in off-policy policy-gradient estimation. Existing methods face a fundamental bias-variance dilemma: token-level IS ratios, as adopted by PPO~\citep{schulman2017proximal} and GRPO~\citep{shao2024deepseekmath}, introduce bias by ignoring prefix state distribution mismatch; full sequence ratios provide exact trajectory-level correction but suffer 
from high variance due to the multiplicative accumulation of per-token ratios, while GSPO~\citep{zheng2025group} improves numerical stability via length normalization at the cost of deviating from the exact full-sequence IS correction. In this work, we identify the cumulative token IS ratio, the product of per-token ratios up to position $t$, as a theoretically principled solution to this dilemma. We prove that, under the token-level policy-gradient formulation, this ratio provides an unbiased prefix correction for each token-level gradient term and has strictly lower variance than the full sequence ratio. Building on this insight, we propose CTPO (Cumulative Token Policy Optimization), which combines the cumulative token IS ratio with position-adaptive clipping that scales log-space clip bounds according to the natural $\sqrt{t}$ growth of the cumulative log-ratio. This yields more consistent regularization across token positions. We implement and evaluate CTPO in the tool-integrated reasoning setting on several challenging mathematical reasoning benchmarks, achieving the best average performance across both model scales compared with strong GRPO and GSPO baselines. Code will be available at \url{https://github.com/horizon-llm/CTPO}.
\end{abstract}

\section{Introduction}
Reinforcement learning has emerged as a powerful paradigm for LLM 
post-training, achieving remarkable success on tasks with verifiable 
rewards such as mathematical reasoning and code generation. Landmark 
systems such as OpenAI o1~\citep{jaech2024openai} and DeepSeek-R1~\citep{guo2025deepseek} have demonstrated that reinforcement learning can dramatically enhance the reasoning capabilities of large language models. A standard algorithmic framework underlying these advances is policy gradient optimization, with Group Relative Policy Optimization 
(GRPO)~\citep{shao2024deepseekmath} serving as a representative 
example. Subsequent works have proposed numerous variants of GRPO, 
including DAPO~\citep{yu2025dapo} and Dr.~GRPO~\citep{liu2025understanding}, addressing issues such as sampling efficiency and reward normalization bias.

A central challenge in these policy gradient methods is handling the 
off-policy nature of training, where trajectories are collected from 
a behavior policy $\pi_b$ but used to update a different target policy 
$\pi_\theta$. Importance sampling (IS) is the standard tool for 
correcting this distribution mismatch. However, existing methods 
adopt IS ratio formulations with fundamental bias-variance limitations. 
GRPO uses token-level IS ratios 
$r_t = \pi_\theta(a_t \mid s_t) / \pi_b(a_t \mid s_t)$, 
which account only for the current-token action probability while 
ignoring the preceding state distribution mismatch, introducing a 
systematic bias into the gradient estimate. A full sequence ratio would 
correct the trajectory-level mismatch, but it suffers from high variance 
due to the multiplicative accumulation of per-token ratios across the 
entire response. GSPO~\citep{zheng2025group} improves numerical stability 
by using a length-normalized sequence-level ratio, equivalently the 
geometric mean of per-token ratios, but this normalized ratio no longer 
corresponds to the exact full-sequence IS correction and is therefore 
biased as an IS correction.

In this work, we identify the cumulative token IS ratio as a theoretically principled solution to this bias-variance dilemma. The cumulative token IS ratio at 
position $t$ is defined as the product of per-token ratios up to position 
$t$, i.e., $\rho_t^{\mathrm{cum}} = \prod_{t'=1}^{t} r_{t'}$. We prove that, 
under the token-level policy-gradient formulation, this ratio provides 
an unbiased prefix correction for each token-level gradient term, since 
it correctly accounts for the prefix trajectory likelihood under both 
policies. At the same time, it has strictly lower variance than the full 
sequence ratio, as suffix tokens beyond position $t$ have unit expectation 
under the behavior policy and contribute only unnecessary variance without 
reducing bias. This makes the cumulative token IS ratio a principled middle ground: it preserves the exact prefix correction needed at each token 
position while avoiding the unnecessary suffix variance of the full 
sequence ratio.

Building on this insight, we further identify a practical challenge 
that arises when applying clipping to the cumulative token IS ratio. 
Since $\log \rho_t^{\mathrm{cum}} = \sum_{t'=1}^{t} \log r_{t'}$ 
accumulates per-token log-ratios along the sequence, its variance 
grows linearly with position: $\mathrm{Var}(\log \rho_t^{\mathrm{cum}}) 
= t\sigma^2$. As a result, a fixed clipping range leads to 
inconsistent regularization across token positions — early tokens 
are rarely clipped while late tokens are clipped far more frequently. 
We empirically observe this log-space variance growth during training and 
propose a position-adaptive clipping strategy that scales the clipping 
thresholds proportionally to $\sqrt{t}$, matching the natural standard 
deviation growth of the cumulative log-ratio and providing more consistent 
regularization across token positions.

We instantiate these ideas in \textsc{CTPO} (Cumulative Token Policy 
Optimization) and evaluate it in the tool-integrated reasoning (TIR) 
setting~\citep{xue2025simpletir}, where the model iteratively generates 
and executes Python code in a sandboxed environment to solve challenging 
mathematical problems. Our contributions are summarized as follows:

\begin{itemize}
    \item \textbf{Cumulative Token IS Ratio.} We identify the cumulative 
    token IS ratio as a theoretically principled solution to the bias-variance 
    dilemma in off-policy LLM post-training. Under the token-level 
    policy-gradient formulation, we prove that it provides an unbiased prefix 
    correction for each gradient term and has strictly lower 
    variance than the full sequence ratio.

    \item \textbf{Position-Adaptive Clipping.} We propose a 
    position-adaptive clipping strategy based on the observation that 
    $\mathrm{Var}(\log \rho_t^{\mathrm{cum}})$ grows linearly with 
    token position. By scaling the clipping thresholds proportionally 
    to $\sqrt{t}$, our strategy provides more uniform regularization 
    across token positions compared to fixed clipping.

    \item \textbf{Empirical Validation.} We evaluate \textsc{CTPO} in 
    the tool-integrated reasoning setting on challenging mathematical 
    benchmarks, where it achieves the best average performance across 
    both model scales compared with strong GRPO and GSPO baselines. 
    Ablation studies further confirm the contribution of position-adaptive 
    clipping to overall performance.
\end{itemize}
\section{Preliminaries}

We first review the token-level MDP formulation underlying LLM generation, 
then present GRPO and GSPO as two representative baselines that adopt 
different importance sampling strategies.

\paragraph{Token-level MDP.}
LLM text generation can be formulated as a finite-horizon Markov Decision 
Process (MDP) $(\mathcal{S}, \mathcal{V}, P, R, H)$, where the state $s_t$ 
at step $t$ is the concatenation of the prompt $x$ and all previously 
generated tokens $(a_1, \ldots, a_{t-1})$, the action $a_t \in \mathcal{V}$ 
is the next token drawn from the policy $\pi_\theta(\cdot \mid s_t)$, and the 
transition is deterministic: $s_{t+1} = s_t \circ a_t$. A scalar reward 
$R(\tau)$ is assigned to the complete trajectory 
$\tau = (s_1, a_1, \ldots, s_H, a_H)$ upon termination.

\paragraph{Policy Gradient and Importance Sampling.}
The learning objective is to maximize the expected reward 
$J(\theta) = \mathbb{E}_{\tau \sim \pi_\theta}[R(\tau)]$. 
By the policy gradient theorem, the gradient is:
\begin{equation*}
    \nabla_\theta J(\theta) = \mathbb{E}_{\tau \sim \pi_\theta}
    \left[\sum_{t=1}^{H} A_t(s_t, a_t)\, \nabla_\theta \log \pi_\theta(a_t \mid s_t)\right],
\end{equation*}
where $A_t(s_t,a_t)$ is the advantage function at position $t$. 
In practice, trajectories are collected from a fixed behavior policy 
$\pi_b = \pi_{\theta_{\text{old}}}$ and reused across multiple gradient steps. 
To correct for this distribution mismatch, importance sampling (IS) yields:
\begin{equation*}
    \nabla_\theta J(\theta) = \mathbb{E}_{\tau \sim \pi_b}
    \left[\sum_{t=1}^{H} \rho_t\, A_t(s_t, a_t)\, \nabla_\theta \log \pi_\theta(a_t \mid s_t)\right],
\end{equation*}
where $\rho_t$ is the IS ratio at position $t$. 
The choice of $\rho_t$ is central to the bias-variance tradeoff 
and is the main distinction between existing algorithms.

\subsection{Group Relative Policy Optimization (GRPO)}

GRPO~\citep{shao2024deepseekmath} eliminates the value network of PPO by 
estimating advantages through group-relative reward normalization. 
For each prompt $x$, the behavior policy $\pi_{\theta_{\text{old}}}$ generates 
$G$ responses $\{o_i\}_{i=1}^{G}$. The advantage of response $o_i$ is computed as:
\begin{equation*}
    A_i = \frac{R_i - \mathrm{mean}(\{R_j\}_{j=1}^{G})}{\mathrm{std}(\{R_j\}_{j=1}^{G})},
\end{equation*}
where $R_i = \mathcal{R}(x, o_i)$ is the scalar reward assigned to 
the entire response $o_i$. Since the reward is only available at the sequence level, GRPO assigns the same advantage to every token in $o_i$, i.e., $A_t(s_t, a_t) = A_i$ for all $t \in \{1, \ldots, |o_i|\}$. GRPO then maximizes the following clipped surrogate objective:
\begin{equation*}
    \mathcal{J}_{\mathrm{GRPO}}(\theta) = \mathbb{E}
    \left[\frac{1}{G}\sum_{i=1}^{G} \frac{1}{|o_i|} \sum_{t=1}^{|o_i|}
    \min\!\left(r_{i,t}\, A_i,\ \mathrm{clip}(r_{i,t}, 1-\varepsilon, 1+\varepsilon)\, A_i\right)\right],
\end{equation*}
where the token-level IS ratio is:
\begin{equation*}
    r_{i,t} = \frac{\pi_\theta(o_{i,t} \mid x, o_{i,<t})}{\pi_{\theta_{\text{old}}}(o_{i,t} \mid x, o_{i,<t})}.
\end{equation*}

\paragraph{Bias of the token-level IS ratio.}
While $r_{i,t}$ is straightforward to compute, it constitutes a 
\emph{biased} estimator of the true IS weight at position $t$. 
The theoretically correct IS ratio for correcting the distribution 
mismatch of the state-action pair $(s_t,a_t)$ must account for both 
the probability of reaching $s_t$ and sampling $a_t$ under 
$\pi_\theta$ versus $\pi_b$, which involves the prefix trajectory up 
to position $t$:
\begin{equation*}
    \rho_t^* = \frac{\pi_\theta(o_{i,1:t} \mid x)}{\pi_b(o_{i,1:t} \mid x)} 
    = \prod_{t'=1}^{t} r_{i,t'}.
\end{equation*}
GRPO uses only the current-token ratio $r_{i,t}$, 
discarding the prefix $\prod_{t'=1}^{t-1} r_{i,t'}$. 
This simplification introduces a systematic bias in the gradient estimate, 
though it yields lower variance.

\subsection{Group Sequence Policy Optimization (GSPO)}
A natural way to address the token-level bias is to define the IS ratio 
at the sequence level. The full sequence ratio for response $o_i$ is:
\begin{equation*}
    \rho_i^{\mathrm{seq}} = 
    \frac{\pi_\theta(o_i \mid x)}{\pi_b(o_i \mid x)} 
    = \prod_{t=1}^{|o_i|} r_{i,t}.
\end{equation*}
This full sequence ratio corrects the trajectory-level distribution 
mismatch, but it can have high variance due to the multiplicative 
accumulation of per-token ratios across the entire response. To mitigate the high variance, GSPO~\citep{zheng2025group} uses a length-normalized sequence-level ratio:
\begin{equation*}
    \rho_i^{\mathrm{GSPO}} = 
    \left(
    \frac{\pi_\theta(o_i \mid x)}{\pi_b(o_i \mid x)}
    \right)^{1/|o_i|}
    =
    \left(\prod_{t=1}^{|o_i|} r_{i,t}\right)^{1/|o_i|}.
\end{equation*}
This ratio is equivalently the geometric mean of the per-token ratios within the sequence. It is applied uniformly to the sequence, and GSPO clips at the sequence 
level rather than the token level:
\begin{equation*}
    \mathcal{J}_{\mathrm{GSPO}}(\theta) = \mathbb{E}
    \left[\frac{1}{G}\sum_{i=1}^{G} 
    \min\!\left(\rho_i^{\mathrm{GSPO}}\, A_i,\ 
    \mathrm{clip}(\rho_i^{\mathrm{GSPO}}, 1-\varepsilon, 1+\varepsilon)\, A_i\right)\right].
\end{equation*}
Length normalization reduces the scale of the IS ratio and improves 
numerical stability, but the resulting ratio no longer corresponds to 
the exact full-sequence IS correction.

In summary, the token-level ratio and the full sequence ratio represent 
two ends of the bias-variance spectrum: the token-level ratio is biased 
but low-variance, while the full sequence ratio is unbiased but 
high-variance. GSPO reduces variance through length normalization, but 
its normalized sequence-level ratio is still biased as an IS correction.
In the next section, we show that the cumulative token IS ratio 
resolves this dilemma by achieving both unbiasedness and lower variance 
than the full sequence formulation.

\section{Cumulative Token Policy Optimization}\label{sec:ctpo}

\subsection{Cumulative Token Importance Ratio}

Recall that the policy gradient requires an expectation under the 
current policy $\pi_\theta$, but in practice trajectories are sampled 
from the behavior policy $\pi_b$. Throughout this section, we analyze 
importance-ratio correction for a generic token-level advantage function 
$A_t(s_t,a_t)$. To correct the distribution mismatch for the state-action 
pair $(s_t,a_t)$ at position $t$, the IS weight must account for the 
probability of reaching $s_t$ and sampling $a_t$ under $\pi_\theta$ 
relative to $\pi_b$. Since the transition dynamics are deterministic, 
this reduces to the ratio of the prefix trajectory likelihoods:
\begin{equation*}
    \rho_t^{\mathrm{cum}} = \frac{\pi_\theta(a_{1:t} \mid x)}{\pi_b(a_{1:t} \mid x)} 
    = \prod_{t'=1}^{t} \frac{\pi_\theta(a_{t'} \mid s_{t'})}{\pi_b(a_{t'} \mid s_{t'})} 
    = \prod_{t'=1}^{t} r_{t'}.
\end{equation*}
We term this the \emph{cumulative token importance ratio}: unlike GRPO, 
which uses only the current-token ratio $r_t$ and ignores the prefix 
trajectory, and unlike the full sequence ratio 
$\prod_{t'=1}^{H} r_{t'}$, which includes the entire response, 
$\rho_t^{\mathrm{cum}}$ accumulates per-token ratios only up to position $t$, 
naturally matching the temporal structure of the MDP. 
Intuitively, $\rho_t^{\mathrm{cum}}$ asks: how much more or less likely 
is it that $\pi_\theta$ would have produced the prefix $(a_1, \ldots, a_t)$ 
compared to $\pi_b$? This is precisely the prefix correction factor needed 
for the token-level policy-gradient term at position $t$, as we formalize 
in the following proposition.

\begin{proposition}[Unbiasedness of Cumulative Token IS Ratio]
\label{prop:unbiased}
Under the token-level MDP formulation, let $A_t(s_t,a_t)$ denote the 
token-level advantage function, the gradient estimator using the cumulative 
token IS ratio is unbiased:
\begin{equation*}
    \mathbb{E}_{\tau \sim \pi_b}\left[\sum_{t=1}^{H} \rho_t^{\mathrm{cum}}\, 
    A_t(s_t, a_t)\, \nabla_\theta \log \pi_\theta(a_t \mid s_t)\right]
    = \nabla_\theta J(\theta).
\end{equation*}
\end{proposition}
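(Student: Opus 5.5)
We prove the identity term by term, using linearity of expectation together with the policy gradient theorem stated in the Preliminaries. Because the sum over $t$ is finite, it suffices to show, for each fixed position $t\in\{1,\dots,H\}$,
\begin{equation*}
    \mathbb{E}_{\tau\sim\pi_b}\!\left[\rho_t^{\mathrm{cum}}\,A_t(s_t,a_t)\,\nabla_\theta\log\pi_\theta(a_t\mid s_t)\right]
    = \mathbb{E}_{\tau\sim\pi_\theta}\!\left[A_t(s_t,a_t)\,\nabla_\theta\log\pi_\theta(a_t\mid s_t)\right].
\end{equation*}
Summing these identities over $t$ gives the right-hand side of the policy gradient formula, which equals $\nabla_\theta J(\theta)$.

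For a fixed $t$, the integrand $f_t(\tau) := A_t(s_t,a_t)\nabla_\theta\log\pi_\theta(a_t\mid s_t)$ depends only on the prefix $(x,a_1,\dots,a_t)$. This holds because the transitions are deterministic, so $s_t = x\circ a_{1:t-1}$, and $A_t$ is treated as a fixed function of $(s_t,a_t)$. Under either policy we factor the trajectory law as
\begin{equation*}
    \pi(\tau\mid x)=\pi(a_{1:t}\mid x)\,\pi(a_{t+1:H}\mid x,a_{1:t}).
\end{equation*}
We then apply the tower property, conditioning on the prefix $a_{1:t}$. The suffix factor integrates to one, so $\mathbb{E}_{\tau\sim\pi_b}[\rho_t^{\mathrm{cum}} f_t] = \sum_{a_{1:t}} \pi_b(a_{1:t}\mid x)\,\rho_t^{\mathrm{cum}} f_t$. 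Substituting $\rho_t^{\mathrm{cum}}=\pi_\theta(a_{1:t}\mid x)/\pi_b(a_{1:t}\mid x)$, the behavior-policy prefix probability cancels. This leaves $\sum_{a_{1:t}}\pi_\theta(a_{1:t}\mid x) f_t$, which is exactly $\mathbb{E}_{\tau\sim\pi_\theta}[f_t]$ after we reinsert the $\pi_\theta$ suffix, which also sums to one. Finally, we take the outer expectation over the prompt $x$, which has the same distribution under both policies.

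Only one point needs care. The cancellation requires absolute continuity: $\pi_b(a_{1:t}\mid x)>0$ wherever $\pi_\theta(a_{1:t}\mid x)>0$. We state this as a standing support assumption; it holds automatically for softmax policies, which put positive mass on every token. We also need a convention for variable-length responses, namely padding terminated trajectories with an absorbing end state at which $A_t=0$, so that the horizon is effectively $H$. Neither point is a real obstacle. The one place to be careful is to make sure $f_t$ does not depend on suffix tokens. If $A_t$ were replaced by a trajectory-level return, it would, and the suffix factor would no longer integrate out. This is why the proposition is stated for a generic token-level $A_t(s_t,a_t)$.
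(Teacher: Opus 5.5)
Your proposal is correct and follows essentially the same route as the paper: reduce to a single position $t$ by linearity, apply a change of measure on the prefix $a_{1:t}$, and integrate out the suffix because $A_t\,\nabla_\theta\log\pi_\theta(a_t\mid s_t)$ depends only on $(x,a_{1:t})$. Your explicit support (absolute continuity) assumption and your padding convention for variable-length responses are details the paper leaves implicit.
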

The proof proceeds by applying a change of measure to the prefix 
trajectory $a_{1:t}$, after which marginalizing over the suffix 
$a_{t+1:H}$ introduces no additional correction factor since both 
$A_t$ and $\nabla_\theta \log \pi_\theta(a_t \mid s_t)$ depend only 
on $(x, a_{1:t})$. The full proof is provided in Appendix~\ref{app:proof_1}.


\subsection{Bias-Variance Analysis}
We now analyze the bias and variance properties of the considered IS ratio 
formulations. As we will show, the token-level ratio $r_t$ sacrifices 
prefix correction for low variance, the full sequence ratio 
$\rho^{\mathrm{seq}}$ is unbiased but suffers from high variance, and 
the cumulative token ratio $\rho_t^{\mathrm{cum}}$ preserves the exact 
prefix correction while avoiding unnecessary suffix variance.
\paragraph{Bias of the token-level ratio.}
The token-level ratio $r_t$ used by GRPO is a biased estimator 
of the true IS weight. 
The correct IS weight for position $t$ must account for the 
probability of reaching state $s_t$ under $\pi_\theta$ versus $\pi_b$, 
which requires the prefix ratio $\rho_t^{\mathrm{cum}}$. 
Formally, for $t>1$,
\begin{align*}
    \mathbb{E}_{\tau \sim \pi_b}\left[r_t\, A_t(s_t,a_t)\, 
    \nabla_\theta \log \pi_\theta(a_t \mid s_t)\right]
    &\neq 
    \mathbb{E}_{\tau \sim \pi_\theta}\left[A_t(s_t,a_t)\, 
    \nabla_\theta \log \pi_\theta(a_t \mid s_t)\right]
\end{align*}
in general, since $r_t$ only corrects the current-token action probability 
and omits the prefix correction $\prod_{t'=1}^{t-1} r_{t'}$ for the state 
distribution mismatch.

\paragraph{Variance of the sequence-level ratio.}
The sequence-level ratio $\rho^{\mathrm{seq}} = \prod_{t'=1}^{H} r_{t'}$ 
is unbiased but incurs unnecessarily high variance. 
To see this, observe that for any position $t$:
\begin{equation*}
    \rho^{\mathrm{seq}} = \rho_t^{\mathrm{cum}} \cdot \underbrace{\prod_{t'=t+1}^{H} r_{t'}}_{\epsilon_t},
\end{equation*}
where $\epsilon_t = \prod_{t'=t+1}^{H} r_{t'}$ is the suffix ratio 
from position $t+1$ to $H$. By the likelihood ratio identity:
\begin{equation*}
    \mathbb{E}_{\pi_b}[\epsilon_t \mid s_t, a_t] = 1,
\end{equation*}
so $\epsilon_t$ does not contribute to correcting the distribution 
mismatch at position $t$, yet it inflates the variance of the estimator. 
This motivates replacing $\rho^{\mathrm{seq}}$ with $\rho_t^{\mathrm{cum}}$, 
which retains all information necessary for unbiasedness while 
discarding the variance-inflating suffix.

We formalize this variance reduction in the following proposition.

\begin{proposition}[Variance Reduction of Cumulative Token IS Ratio]
\label{prop:variance}
The following statements hold regarding the variance of the IS ratios.
\textbf{(i)} For any $t < H$, if $\pi_\theta$ and $\pi_b$ differ on a $\pi_b$ reachable future state, then:
\begin{equation*}
    \mathrm{Var}(\rho^{\mathrm{seq}}) > \mathrm{Var}(\rho_t^{\mathrm{cum}}),
\end{equation*}
\textbf{(ii) (Independence assumption)} Further assume the per-token 
ratios $r_{t'}$ are independent across positions, 
and let 
$\chi^2_{t'} =
\mathbb{E}_{\pi_b}
\left[
\chi^2(\pi_\theta(\cdot \mid s_{t'}) \| \pi_b(\cdot \mid s_{t'}))
\right]$
denote the averaged local $\chi^2$ divergence at position $t'$. Then:
\begin{equation*}
    \mathrm{Var}(\rho_t^{\mathrm{cum}}) = \prod_{t'=1}^{t}(1 + \chi^2_{t'}) - 1,
    \qquad
    \mathrm{Var}(\rho^{\mathrm{seq}}) = \prod_{t'=1}^{H}(1 + \chi^2_{t'}) - 1,
\end{equation*}
and consequently:
\begin{equation*}
    \frac{\mathrm{Var}(\rho^{\mathrm{seq}})}{\mathrm{Var}(\rho_t^{\mathrm{cum}})} 
    = \frac{\prod_{t'=1}^{H}(1 + \chi^2_{t'}) - 1}{\prod_{t'=1}^{t}(1 + \chi^2_{t'}) - 1}.
\end{equation*}
\end{proposition}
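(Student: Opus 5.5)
For part (i), I would use the martingale structure of the cumulative ratios under $\pi_b$. Let $\mathcal{F}_t$ be the $\sigma$-algebra generated by $(x,a_{1:t})$. The likelihood-ratio identity stated before the proposition gives $\mathbb{E}_{\pi_b}[\epsilon_t \mid \mathcal{F}_t]=1$. Since $\rho_t^{\mathrm{cum}}$ is $\mathcal{F}_t$-measurable, this yields
\begin{equation*}
\mathbb{E}_{\pi_b}[\rho^{\mathrm{seq}}\mid\mathcal{F}_t]=\rho_t^{\mathrm{cum}}.
\end{equation*}
Hence $\mathbb{E}[\rho^{\mathrm{seq}}]=\mathbb{E}[\rho_t^{\mathrm{cum}}]=1$. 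The law of total variance then gives the exact decomposition
\begin{equation*}
\mathrm{Var}(\rho^{\mathrm{seq}})=\mathrm{Var}(\rho_t^{\mathrm{cum}})+\mathbb{E}_{\pi_b}\!\left[(\rho_t^{\mathrm{cum}})^2\,\mathrm{Var}_{\pi_b}(\epsilon_t\mid\mathcal{F}_t)\right].
\end{equation*}
It remains to show the second term is strictly positive. This is the only delicate step, because it needs the hypothesis to be read carefully. I would interpret "differ on a $\pi_b$-reachable future state" as follows: there is a state $s_{t'}$ with $t'>t$ and positive $\pi_b$-probability such that $\pi_\theta(\cdot\mid s_{t'})\neq\pi_b(\cdot\mid s_{t'})$. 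At such a state the one-step ratio $r_{t'}$ is not $\pi_b$-a.s. constant, since a ratio with $\pi_b$-mean $1$ that is constant must equal $1$. I would then peel off the intermediate factors one at a time, using the same conditional-variance decomposition applied to $\epsilon_t$. This shows that $\mathrm{Var}(\epsilon_t\mid\mathcal{F}_t)>0$ on a set of prefixes of positive $\pi_b$-probability. On that same set we need $\rho_t^{\mathrm{cum}}>0$. This holds because the prefix is $\pi_b$-reachable and we assume $\pi_\theta$ assigns it positive probability (absolute continuity). If that could fail, I would add it as a standing support assumption. Strict positivity of the expectation follows.

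For part (ii), assume the $r_{t'}$ are independent. Then $\mathbb{E}[(\rho_t^{\mathrm{cum}})^2]=\prod_{t'\le t}\mathbb{E}[r_{t'}^2]$. Each second moment is
\begin{equation*}
\mathbb{E}_{\pi_b}[r_{t'}^2]=\mathbb{E}_{s_{t'}}\Big[\textstyle\sum_a \pi_\theta(a\mid s_{t'})^2/\pi_b(a\mid s_{t'})\Big]=1+\chi^2_{t'},
\end{equation*}
using $\chi^2(p\|q)=\sum p^2/q-1$. Combining this with $\mathbb{E}[\rho_t^{\mathrm{cum}}]=1$ gives $\mathrm{Var}(\rho_t^{\mathrm{cum}})=\prod_{t'\le t}(1+\chi^2_{t'})-1$. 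The case $t=H$ gives the formula for $\rho^{\mathrm{seq}}$, and the ratio formula follows by division, valid whenever the denominator is nonzero.

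One subtlety is that independence of the $r_{t'}$ does not strictly follow from the autoregressive structure. I would therefore state it exactly as the assumption in (ii). I would also note that each $\mathbb{E}[r_{t'}^2]$ is taken marginally, which is where the averaged $\chi^2_{t'}$ enters.
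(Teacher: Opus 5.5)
Your proposal is correct and follows the paper's proof essentially step for step. For (i) it uses the same decomposition $\rho^{\mathrm{seq}}=\rho_t^{\mathrm{cum}}\epsilon_t$ with $\mathbb{E}_{\pi_b}[\epsilon_t\mid a_{1:t}]=1$ and the law of total variance, and for (ii) the same second-moment factorization $\mathbb{E}_{\pi_b}[r_{t'}^2]=1+\chi^2_{t'}$ combined with $\mathbb{E}_{\pi_b}[\rho_t^{\mathrm{cum}}]=1$. Your added requirement in (i), that $\pi_\theta$ give positive probability to the relevant prefixes and intermediate tokens so that the conditional variance term survives the $(\rho_t^{\mathrm{cum}})^2$ weighting, makes explicit a support condition the paper's argument uses only implicitly; it holds automatically for full-support softmax policies.
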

The proof is deferred to Appendix~\ref{app:proof_2}. Part~(i) holds without any distributional assumptions, showing that the 
cumulative token IS ratio has strictly lower variance than the full 
sequence ratio whenever the future token ratios after position $t$ vary 
under the behavior policy. Part~(ii) 
further quantifies this reduction. As a concrete example, if all positions 
share the same averaged local divergence $\chi^2_{t'} = \delta$, the variance ratio 
equals $\frac{(1+\delta)^H - 1}{(1+\delta)^t - 1}$. In the near on-policy 
regime where $\pi_\theta$ and $\pi_b$ are close, $\delta \to 0$ and 
the ratio converges to $H/t$, meaning tokens at early positions benefit 
the most from using $\rho_t^{\mathrm{cum}}$ over $\rho^{\mathrm{seq}}$. 
As the degree of off-policy increases, $\delta$ grows and the ratio 
scales as $(1+\delta)^{H-t}$, increasing exponentially with the remaining 
sequence length $H - t$. This advantage grows with sequence length $H$, 
making the cumulative token IS ratio particularly well-suited for tasks 
that require extended generation.

We summarize the bias-variance properties of the considered IS ratio 
formulations in Table~\ref{tab:comparison}. GRPO uses a token-level ratio, 
which has low variance but omits prefix correction. The full sequence ratio 
is unbiased but suffers from high variance. GSPO reduces the scale of the 
importance weight through length normalization, but the normalized ratio is 
biased as an IS correction. CTPO achieves the desired combination among 
these formulations: it preserves the exact prefix correction while having 
strictly lower variance than the full sequence ratio.

\begin{table}[t]
\centering
\caption{Comparison of importance sampling ratio formulations in terms 
of bias and variance.}
\label{tab:comparison}
\begin{tabular}{lccc}
\toprule
\textbf{Method} & \textbf{IS Ratio} & \textbf{Unbiased} & \textbf{Variance} \\
\midrule
GRPO  & $r_t$ & \xmark & Low \\
Full sequence ratio & $\prod_{t'=1}^{H} r_{t'}$ & \cmark & High \\
GSPO (length-normalized) & $\left(\prod_{t'=1}^{H} r_{t'}\right)^{1/H}$ & \xmark & Reduced scale \\
CTPO (ours) & $\prod_{t'=1}^{t} r_{t'}$ & \cmark & \textbf{Lower than full sequence} \\
\bottomrule
\end{tabular}
\end{table}

\subsection{Position-Adaptive Clipping}\label{sec:position-adaptive}
Clipping is a standard technique in PPO and GRPO to stabilize training 
by constraining the IS ratio within a fixed range $[1-\varepsilon, 1+\varepsilon]$, 
preventing excessively large policy updates. However, directly applying 
this fixed clipping range to $\rho_t^{\mathrm{cum}}$ is inadequate. Recall that $\log \rho_t^{\mathrm{cum}} = \sum_{t'=1}^{t} \log r_{t'}$ 
accumulates the per-token log-ratios along the generated sequence, 
and as these shifts accumulate, its magnitude tends to grow with 
position $t$. To see this more precisely, assuming the per-token 
log-ratios are independent with variance $\sigma^2$, 
the variance of the cumulative sum grows linearly with position:
\begin{equation*}
    \mathrm{Var}(\log \rho_t^{\mathrm{cum}}) = t\sigma^2,
\end{equation*}
causing early tokens to be rarely clipped while later tokens with 
larger cumulative deviations are clipped far more frequently, 
leading to inconsistent regularization across positions.

To address this, we propose scaling the clipping thresholds in log-space 
proportionally to the standard deviation of $\log \rho_t^{\mathrm{cum}}$. 
Specifically, we define position-adaptive clipping thresholds as:
\begin{equation*}
    \varepsilon_{\mathrm{high}}(t) = \varepsilon_{\mathrm{high}} \cdot t^p, 
    \qquad 
    \varepsilon_{\mathrm{low}}(t) = \varepsilon_{\mathrm{low}} \cdot t^p,
\end{equation*}
where $\varepsilon_{\mathrm{high}}, \varepsilon_{\mathrm{low}} > 0$ are 
base thresholds and $p > 0$ is the scaling exponent, yielding the 
position-dependent trust region:
\begin{equation*}
    \rho_t^{\mathrm{cum}} \in 
\left[e^{-\varepsilon_{\mathrm{low}}(t)},\ 
e^{\varepsilon_{\mathrm{high}}(t)}\right].
\end{equation*}
With $p = 0.5$, the clipping thresholds grow as $\sqrt{t}$, matching the standard deviation growth of $\log \rho_t^{\mathrm{cum}}$ 
under the independence assumption, providing more consistent 
regularization across positions. 

In practice, following GRPO, we avoid training a separate critic network 
and use an outcome-level group-relative reward as a surrogate advantage 
estimate. Specifically, for each sampled response $o_i$, we compute a 
scalar advantage $A_i$ from the normalized outcome reward and apply it 
uniformly across all token positions. Combining this practical advantage 
estimation strategy with the cumulative token IS ratio and 
position-adaptive clipping, the final CTPO objective is:
\begin{equation*}
    \mathcal{J}_{\mathrm{CTPO}}(\theta) = \mathbb{E}
    \left[\frac{1}{G}\sum_{i=1}^{G} \frac{1}{|o_i|} \sum_{t=1}^{|o_i|}
    \min\!\left(\rho_{i,t}^{\mathrm{cum}}\, A_i,\ 
    \mathrm{clip}\!\left(\rho_{i,t}^{\mathrm{cum}}, 
    e^{-\varepsilon_{\mathrm{low}}(t)}, 
    e^{\varepsilon_{\mathrm{high}}(t)}\right) A_i\right)\right].
\end{equation*}

\section{Experiments}\label{sec:exp}

\subsection{Experimental Setup}

\paragraph{Task and Models.}
We focus on the tool-integrated reasoning (TIR) setting~\citep{xue2025simpletir}, 
where the model iteratively generates Python code, executes it in a sandboxed 
Python interpreter, and uses the returned output to inform subsequent reasoning 
steps. Each trajectory consists of up to 5 turns of model-environment interaction, 
with a maximum response length of 8{,}000 tokens per trajectory. We conduct experiments 
on two model scales: Qwen3-4B and Qwen3-14B~\citep{yang2025qwen3}.

\paragraph{Datasets and Benchmarks.}
We train on DeepScaleR~\citep{deepscaler2025}, a curated dataset of approximately 40K mathematics problems. We evaluate on four challenging competition-level mathematical reasoning 
benchmarks: AIME 2025\footnote{\url{https://huggingface.co/datasets/math-ai/aime25}}, 
AIME 2026\footnote{\url{https://huggingface.co/datasets/math-ai/aime26}}, 
HMMT 2025\footnote{\url{https://huggingface.co/datasets/MathArena/hmmt_nov_2025}}, 
and BRUMO 2025\footnote{\url{https://huggingface.co/datasets/MathArena/brumo_2025}}, spanning a range of olympiad-level competition problems. For each benchmark we report avg@32, where we sample 32 responses 
per problem and average the pass rate.

\paragraph{Baselines and Implementation.}
We compare CTPO against two baselines representing different IS ratio designs: 
GRPO~\citep{shao2024deepseekmath}, which adopts token-level IS ratios, 
and GSPO~\citep{zheng2025group}, which adopts a length-normalized sequence-level IS ratio. 
To isolate the effect of IS ratio design on downstream performance, 
we keep all other training configurations identical across methods: 
a prompt batch size of 512, 8 responses per prompt, and a learning rate 
of $1\times10^{-6}$, implemented using the VERL 
framework~\citep{sheng2025hybridflow}. For CTPO, we set the position-adaptive clipping hyperparameters to $\varepsilon_{\mathrm{low}}=0.025$ and 
$\varepsilon_{\mathrm{high}}=0.05$. All experiments are conducted on a single node of 8 NVIDIA H100 or H200 GPUs.

\subsection{Main Results}
\begin{table}[t]
\centering
\caption{Main results on competition-level mathematical reasoning benchmarks
in the tool-integrated reasoning setting. All models are evaluated with avg@32. \textsc{CTPO} achieves the best average performance across both model scales.}
\label{tab:main_results}
\renewcommand{\arraystretch}{1.2}
\begin{tabular}{llccccc}
\toprule
\textbf{Base Model} & \textbf{Method} & \textbf{AIME 25} & \textbf{AIME 26} 
& \textbf{BRUMO 25} & \textbf{HMMT 25} & \textbf{Avg} \\
\midrule
\multirow{4}{*}{Qwen3-4B}
    & Base                 &  1.9 &  3.4 &  6.3 &  4.1 &   3.9 \\
    & GRPO                 &   43.0 &   44.1 &   53.1 &   32.6 &    43.2 \\
    & GSPO                 & 49.2 & 47.6 & 56.8 & 37.0 &  47.7 \\
    & \textsc{CTPO} (Ours) & 53.5 & 52.1 & 59.7 & 40.4 & \textbf{51.4} \\
\midrule
\multirow{4}{*}{Qwen3-14B}
    & Base                 &  4.3 &  6.9 &  7.0 &  6.0 &   6.1 \\
    & GRPO                 & 55.3 & 49.8 & 61.9 & 51.5 &  54.6 \\
    & GSPO                 & 56.4 & 56.1 & 63.3 & 46.0 &  55.5 \\
    & \textsc{CTPO} (Ours) & 65.0 & 59.1 & 63.0 & 48.0 & \textbf{58.8} \\
\bottomrule
\end{tabular}
\end{table}

Table~\ref{tab:main_results} presents the main results of CTPO against 
GRPO and GSPO on four competition-level mathematical reasoning benchmarks 
in the tool-integrated reasoning setting.

CTPO consistently achieves the best average performance across both model 
scales. On Qwen3-4B, CTPO outperforms the strongest baseline GSPO by 3.7 
points on average (51.4 vs. 47.7), a relative improvement of 7.8\%. 
On Qwen3-14B, CTPO outperforms GSPO by 3.3 points (58.8 vs. 55.5), 
a relative improvement of 5.9\%. Gains are observed consistently across 
both model scales, demonstrating that the cumulative token IS ratio provides 
a reliable improvement over strong token-level and length-normalized sequence-level baselines.

These results support our central claim: by preserving the exact prefix 
correction needed at each token position while avoiding unnecessary suffix 
variance, the cumulative token IS ratio translates its theoretical advantage 
into consistent average improvements across challenging competition-level 
benchmarks.

\subsection{Cumulative IS Ratio Variance and Clipping Behavior}

In this subsection, we analyze the dynamics of the cumulative token IS ratio $\rho_t^{\mathrm{cum}}$ 
during training to empirically examine its  position-dependent variance growth and motivate the position-adaptive clipping design. All analyses in this section are based on Qwen3-4B.

\paragraph{Variance growth of $\log \rho_t^{\mathrm{cum}}$.}
The top row of Figure~\ref{fig:is_ratio_analysis} shows the empirical standard 
deviation of $\log \rho_t^{\mathrm{cum}}$ as a function of token position $t$ 
across training steps 50, 100, and 150. In all cases, the empirical std closely 
follows the fitted curve $\hat{\sigma}\sqrt{t}$, consistent with the 
theoretical calculation $\mathrm{Var}(\log \rho_t^{\mathrm{cum}}) = t\sigma^2$ 
from Section~\ref{sec:position-adaptive}. This variance growth with position confirms 
that a fixed clipping range becomes increasingly misaligned 
as token position grows.

\paragraph{Clip rate under fixed vs. adaptive clipping.}
The bottom row of Figure~\ref{fig:is_ratio_analysis} compares the per-position 
clip rate under fixed clipping (ratio $\in [0.5, 5]$) and adaptive clipping 
across the same training steps. The fixed clip rate increases monotonically 
with position $t$, reaching up to 20\% for late tokens while remaining near 
0\% for early tokens. This positional imbalance disproportionately 
suppresses gradient updates from late tokens, an issue that is further 
amplified in extended generation tasks. In contrast, adaptive clipping 
maintains a substantially more uniform clip rate across all positions, 
consistently around 5--10\% regardless of token position or training step. 
This demonstrates that position-adaptive clipping effectively compensates 
for the variance growth of $\log \rho_t^{\mathrm{cum}}$.

\begin{figure}[t]
    \centering
    \includegraphics[width=0.32\textwidth]{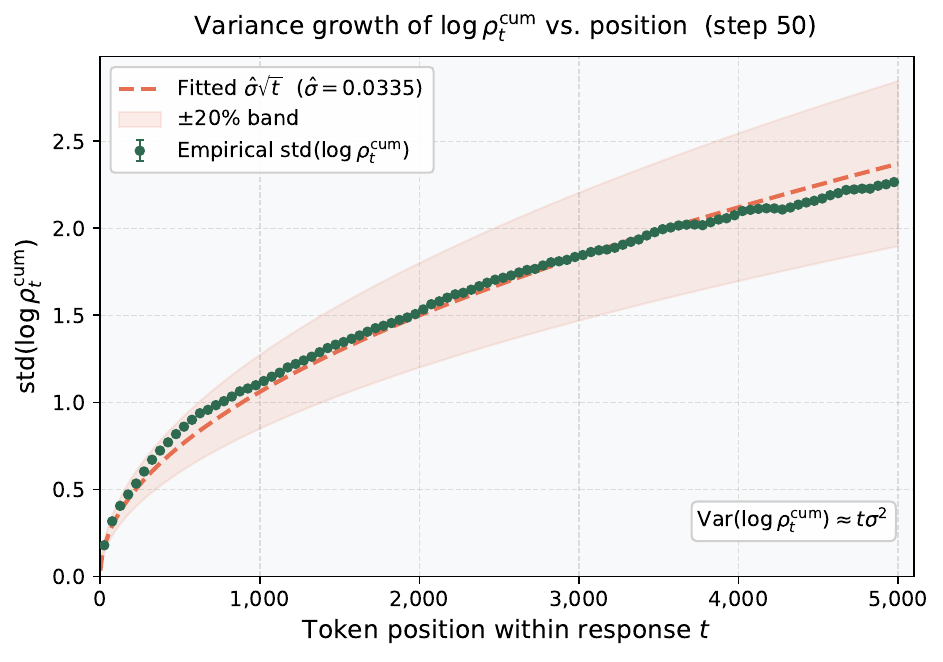}
    \hfill
    \includegraphics[width=0.32\textwidth]{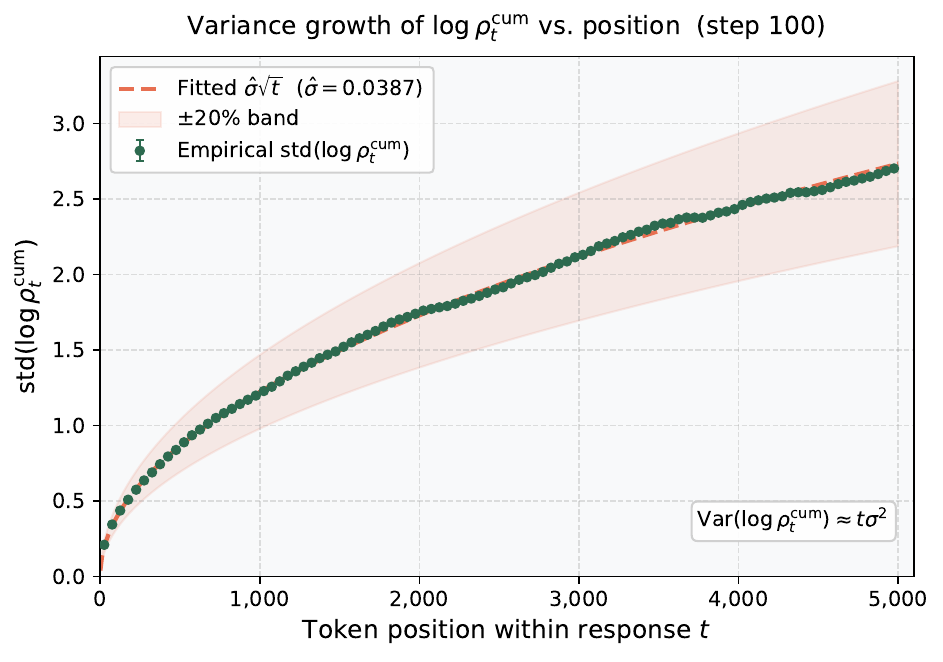}
    \hfill
    \includegraphics[width=0.32\textwidth]{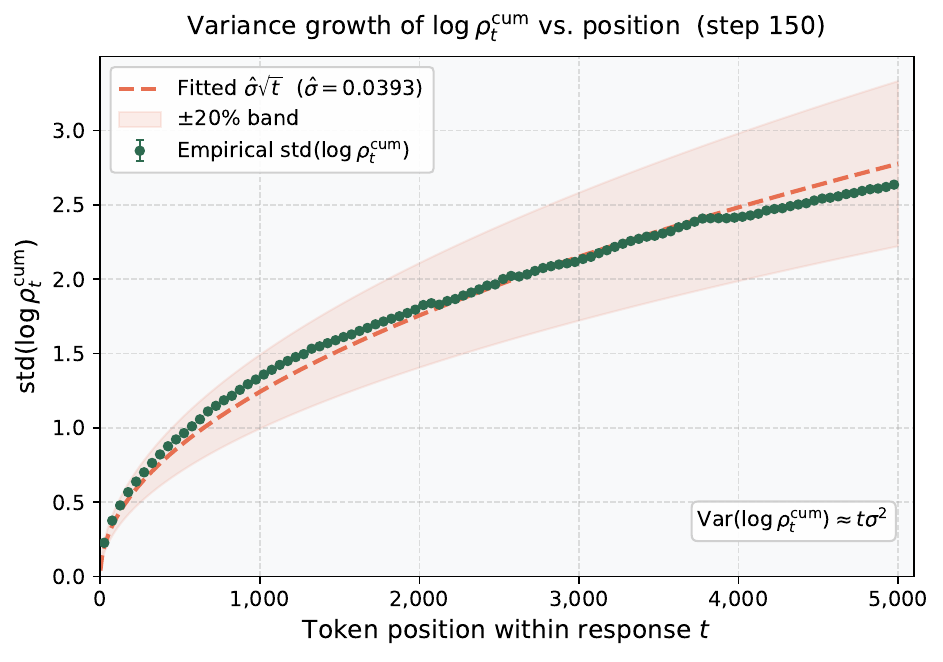}
    \includegraphics[width=0.32\textwidth]{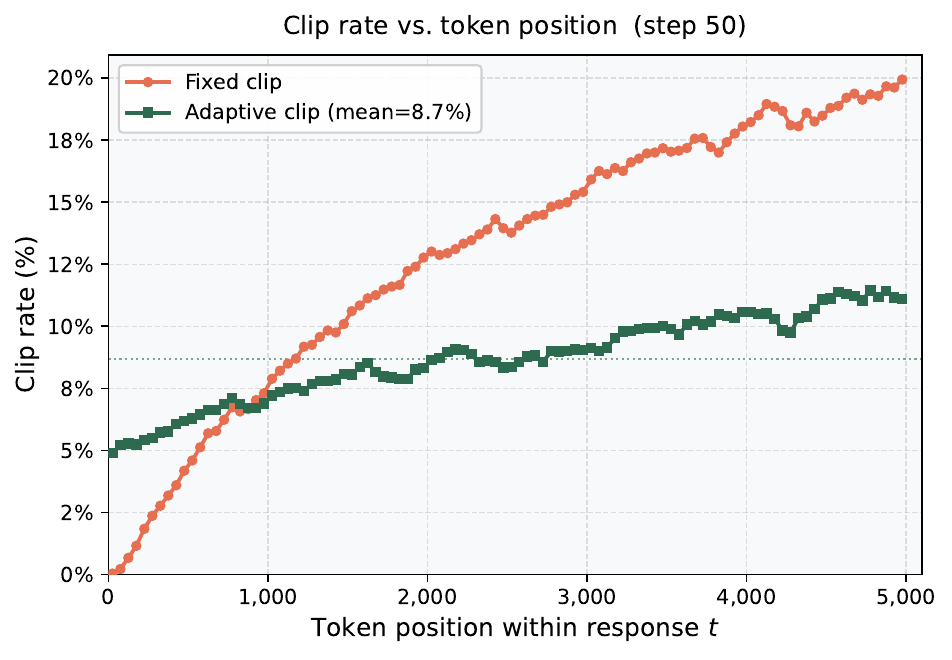}
    \hfill
    \includegraphics[width=0.32\textwidth]{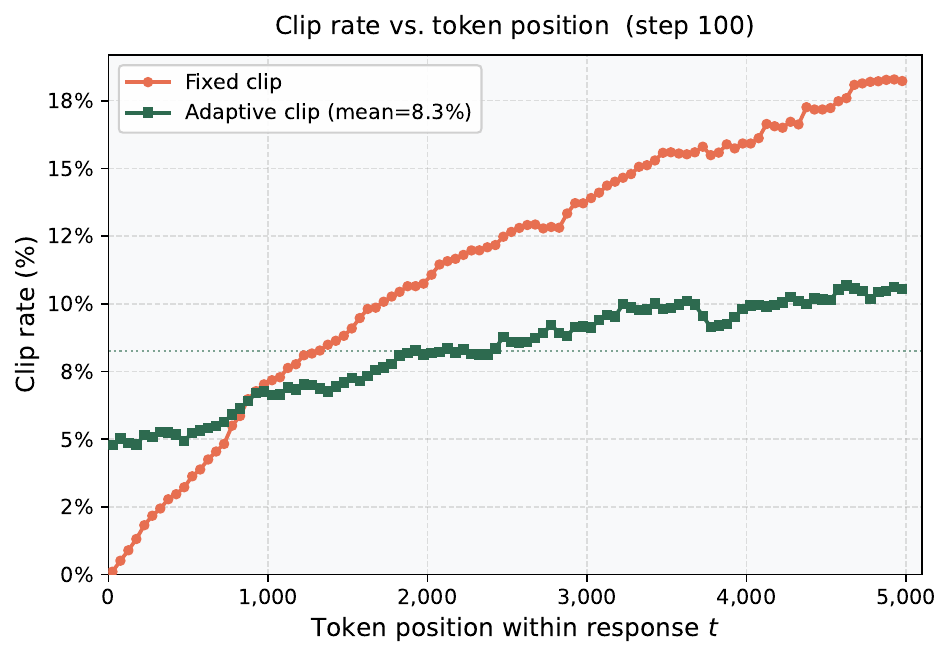}
    \hfill
    \includegraphics[width=0.32\textwidth]{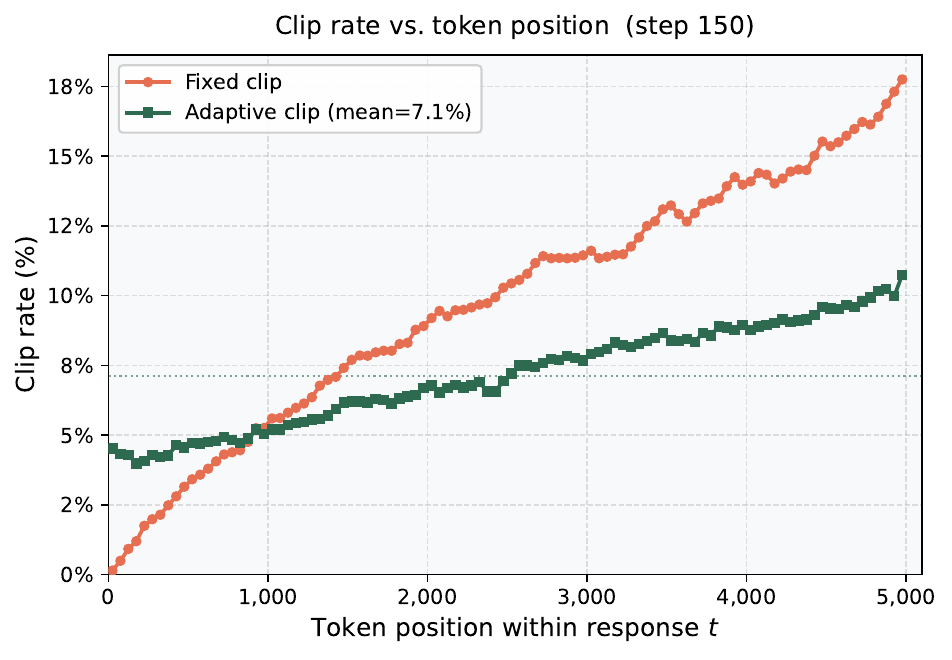}
    \caption{Analysis of $\log \rho_t^{\mathrm{cum}}$ across training steps 50, 100, and 150. \textbf{Top row:} Empirical standard deviation of $\log \rho_t^{\mathrm{cum}}$ vs. token position $t$, fitted with $\hat{\sigma}\sqrt{t}$, confirming the log-space variance growth discussed in 
    Section~\ref{sec:position-adaptive}. \textbf{Bottom row:} Clip rate vs. position under fixed clipping (ratio $\in [0.5, 5]$) and adaptive clipping. The fixed clip rate grows monotonically with $t$, while adaptive clipping maintains a substantially more uniform clip rate across positions.}
    \label{fig:is_ratio_analysis}
\end{figure}

\subsection{Ablation Study}
\begin{table}[t]
\centering
\caption{Ablation study on position-adaptive clipping vs. fixed clipping. 
Position-adaptive clipping consistently outperforms fixed clipping 
across all benchmarks.}
\label{tab:ablation}
\renewcommand{\arraystretch}{1.2}
\begin{tabular}{lccccc}
\toprule
\textbf{Method} & \textbf{AIME 25} & \textbf{AIME 26} 
& \textbf{BRUMO 25} & \textbf{HMMT 25} & \textbf{Avg} \\
\midrule
CTPO w/ fixed clip   & 55.5 & 57.5 & 62.6 & 47.1 & 55.7 \\
\textsc{CTPO} (Ours) & 65.0 & 59.1 & 63.0 & 48.0 & \textbf{58.8} \\
\bottomrule
\end{tabular}
\end{table}

Table~\ref{tab:ablation} ablates the contribution of position-adaptive 
clipping on Qwen3-14B by replacing it with a fixed clipping range while 
keeping all other components of CTPO unchanged. Position-adaptive clipping 
outperforms fixed clipping by 3.1 points on average (58.8 vs. 55.7), 
with consistent gains across all four benchmarks. These results indicate that the position-dependent variance growth of 
$\log \rho_t^{\mathrm{cum}}$ has practical consequences for training, 
and that scaling the clipping thresholds proportionally to $\sqrt{t}$ 
provides a meaningful benefit beyond the theoretical motivation.

\subsection{Training Dynamics}

Figure~\ref{fig:training_dynamics} shows the training dynamics of GRPO, 
GSPO, and CTPO on Qwen3-14B. CTPO achieves higher AIME 2025 accuracy than both baselines throughout 
training, with the performance gap becoming more pronounced in the later stages of training. The clip fraction of CTPO lies between that 
of GRPO and GSPO and remains stable throughout training, indicating 
that position-adaptive clipping does not introduce training instability. All three methods also exhibit similar response length growth, suggesting 
that CTPO does not noticeably alter generation length dynamics.

\begin{figure}[t]
    \centering
    \includegraphics[width=0.32\textwidth]{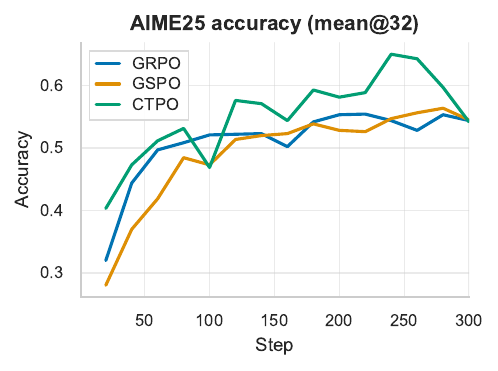}
    \hfill
    \includegraphics[width=0.32\textwidth]{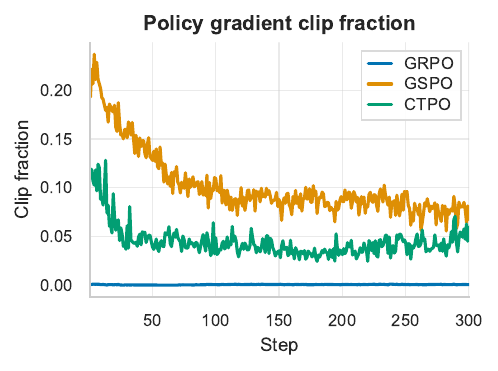}
    \hfill
    \includegraphics[width=0.32\textwidth]{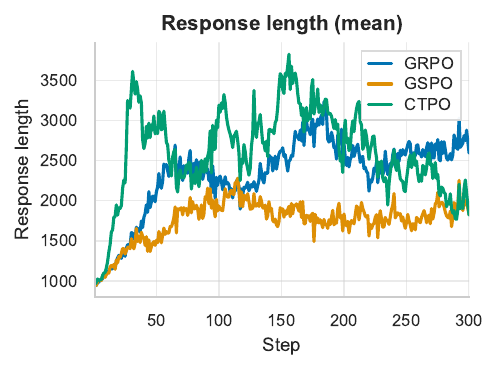}
    \caption{Training dynamics of GRPO, GSPO, and CTPO. 
    \textbf{Left:} AIME 2025 accuracy (avg@32) throughout training. 
    \textbf{Middle:} Policy gradient clip fraction, measured as the fraction of tokens whose method-specific IS ratio exceeds the corresponding clipping threshold. 
    \textbf{Right:} Mean response length throughout training.}
    \label{fig:training_dynamics}
\end{figure}
\section{Related Work}

\paragraph{Importance Sampling in Reinforcement Learning.} 
Importance sampling (IS) has a long history in off-policy reinforcement 
learning as a tool for correcting distributional mismatch between the 
behavior and target policies. Early work by~\citet{precup2000eligibility} 
established foundational IS-based off-policy evaluation methods, 
introducing per-decision importance sampling that exploits the temporal 
structure of MDPs to reduce variance by discarding future terms 
irrelevant to the current decision. Variance reduction 
techniques such as control variates~\citep{greensmith2004variance} 
and weighted importance sampling~\citep{thomas2015safe} have also been 
widely studied in this context. For off-policy value evaluation, 
doubly robust estimators~\citep{jiang2016doubly} combine importance 
sampling with value function estimates to reduce variance while preserving 
unbiasedness under standard conditions. Our work builds on these classical 
ideas and applies them to the specific structure of LLM token generation, 
where the token-level MDP formulation naturally motivates the cumulative 
token IS ratio as a theoretically principled per-decision correction.

\paragraph{Reinforcement Learning for LLM Post-Training.}
Reinforcement learning has become a cornerstone of LLM post-training 
since the advent of RLHF~\citep{ouyang2022training}. PPO~\citep{schulman2017proximal} 
was among the first algorithms applied in this setting to optimize 
a KL-regularized reward objective~\citep{bai2022training}. 
\citet{rafailov2024direct} introduced DPO as a simpler alternative 
that directly optimizes a Bradley-Terry-derived loss without explicit 
reward modeling, spawning a large family of 
variants~\citep{ethayarajh2024kto, meng2024simpo, dong2024rlhf}. 
A parallel line of work relaxes the Bradley-Terry assumption 
entirely~\citep{azar2024general,munos2023nash,ye2024online,wu2024self,zhang2024iterative,zhang2025improving}.
More recently, reinforcement learning has achieved notable success in 
enhancing the reasoning capabilities of LLMs~\citep{jaech2024openai, guo2025deepseek}. 
A representative example is GRPO~\citep{shao2024deepseekmath}, which 
adopts token-level IS ratios and estimates advantages through group-relative 
reward normalization. A number of subsequent works build on the GRPO framework while retaining
token-level IS ratios, improving different components of the training pipeline.
DAPO~\citep{yu2025dapo} introduces dynamic sampling, Dr.~GRPO~\citep{liu2025understanding} corrects length normalization bias, SAPO~\citep{gao2025soft} replaces hard clipping with a soft gating mechanism, and AR3PO~\citep{zhang2025ar3po} improves sampling efficiency via adaptive rollout
and response reuse. In contrast, 
GSPO~\citep{zheng2025group} adopts a length-normalized sequence-level 
ratio, equivalently the geometric mean of per-token ratios, to improve 
numerical stability, but this normalized ratio no longer corresponds to 
the exact full-sequence IS correction. Our work addresses this gap by 
proposing the cumulative token IS ratio, which preserves the exact prefix 
correction needed for token-level policy-gradient terms while avoiding 
unnecessary suffix variance from the full sequence ratio.
\section{Conclusion}\label{sec:conclusion}
We introduced CTPO, which combines the cumulative token IS ratio with position-adaptive clipping to provide principled prefix correction and more consistent regularization, yielding empirical improvements in tool-integrated mathematical reasoning. In the future, we aim to extend CTPO to broader agentic settings involving longer-horizon interaction and richer environment feedback.
\bibliography{ref}

@article{guo2025deepseek,
  title={Deepseek-r1: Incentivizing reasoning capability in llms via reinforcement learning},
  author={Guo, Daya and Yang, Dejian and Zhang, Haowei and Song, Junxiao and Zhang, Ruoyu and Xu, Runxin and Zhu, Qihao and Ma, Shirong and Wang, Peiyi and Bi, Xiao and others},
  journal={arXiv preprint arXiv:2501.12948},
  year={2025}
}

@article{bai2022training,
  title={Training a helpful and harmless assistant with reinforcement learning from human feedback},
  author={Bai, Yuntao and Jones, Andy and Ndousse, Kamal and Askell, Amanda and Chen, Anna and DasSarma, Nova and Drain, Dawn and Fort, Stanislav and Ganguli, Deep and Henighan, Tom and others},
  journal={arXiv preprint arXiv:2204.05862},
  year={2022}
}

@article{schulman2017proximal,
  title={Proximal policy optimization algorithms},
  author={Schulman, John and Wolski, Filip and Dhariwal, Prafulla and Radford, Alec and Klimov, Oleg},
  journal={arXiv preprint arXiv:1707.06347},
  year={2017}
}

@article{shao2024deepseekmath,
  title={Deepseekmath: Pushing the limits of mathematical reasoning in open language models},
  author={Shao, Zhihong and Wang, Peiyi and Zhu, Qihao and Xu, Runxin and Song, Junxiao and Bi, Xiao and Zhang, Haowei and Zhang, Mingchuan and Li, YK and Wu, Yang and others},
  journal={arXiv preprint arXiv:2402.03300},
  year={2024}
}

@article{yu2025dapo,
  title={Dapo: An open-source llm reinforcement learning system at scale},
  author={Yu, Qiying and Zhang, Zheng and Zhu, Ruofei and Yuan, Yufeng and Zuo, Xiaochen and Yue, Yu and Dai, Weinan and Fan, Tiantian and Liu, Gaohong and Liu, Lingjun and others},
  journal={arXiv preprint arXiv:2503.14476},
  year={2025}
}

@article{ouyang2022training,
  title={Training language models to follow instructions with human feedback},
  author={Ouyang, Long and Wu, Jeffrey and Jiang, Xu and Almeida, Diogo and Wainwright, Carroll and Mishkin, Pamela and Zhang, Chong and Agarwal, Sandhini and Slama, Katarina and Ray, Alex and others},
  journal={Advances in neural information processing systems},
  volume={35},
  pages={27730--27744},
  year={2022}
}

@article{rafailov2024direct,
  title={Direct preference optimization: Your language model is secretly a reward model},
  author={Rafailov, Rafael and Sharma, Archit and Mitchell, Eric and Manning, Christopher D and Ermon, Stefano and Finn, Chelsea},
  journal={Advances in Neural Information Processing Systems},
  volume={36},
  year={2024}
}

@article{ethayarajh2024kto,
  title={Kto: Model alignment as prospect theoretic optimization},
  author={Ethayarajh, Kawin and Xu, Winnie and Muennighoff, Niklas and Jurafsky, Dan and Kiela, Douwe},
  journal={arXiv preprint arXiv:2402.01306},
  year={2024}
}

@article{meng2024simpo,
  title={Simpo: Simple preference optimization with a reference-free reward},
  author={Meng, Yu and Xia, Mengzhou and Chen, Danqi},
  journal={Advances in Neural Information Processing Systems},
  volume={37},
  pages={124198--124235},
  year={2024}
}

@article{dong2024rlhf,
  title={Rlhf workflow: From reward modeling to online rlhf},
  author={Dong, Hanze and Xiong, Wei and Pang, Bo and Wang, Haoxiang and Zhao, Han and Zhou, Yingbo and Jiang, Nan and Sahoo, Doyen and Xiong, Caiming and Zhang, Tong},
  journal={arXiv preprint arXiv:2405.07863},
  year={2024}
}

@inproceedings{azar2024general,
  title={A general theoretical paradigm to understand learning from human preferences},
  author={Azar, Mohammad Gheshlaghi and Guo, Zhaohan Daniel and Piot, Bilal and Munos, Remi and Rowland, Mark and Valko, Michal and Calandriello, Daniele},
  booktitle={International Conference on Artificial Intelligence and Statistics},
  pages={4447--4455},
  year={2024},
  organization={PMLR}
}

@article{munos2023nash,
  title={Nash learning from human feedback},
  author={Munos, R{\'e}mi and Valko, Michal and Calandriello, Daniele and Azar, Mohammad Gheshlaghi and Rowland, Mark and Guo, Zhaohan Daniel and Tang, Yunhao and Geist, Matthieu and Mesnard, Thomas and Michi, Andrea and others},
  journal={arXiv preprint arXiv:2312.00886},
  volume={18},
  year={2023}
}

@article{wu2024self,
  title={Self-play preference optimization for language model alignment},
  author={Wu, Yue and Sun, Zhiqing and Yuan, Huizhuo and Ji, Kaixuan and Yang, Yiming and Gu, Quanquan},
  journal={arXiv preprint arXiv:2405.00675},
  year={2024}
}

@article{ye2024online,
  title={Online iterative reinforcement learning from human feedback with general preference model},
  author={Ye, Chenlu and Xiong, Wei and Zhang, Yuheng and Dong, Hanze and Jiang, Nan and Zhang, Tong},
  journal={Advances in Neural Information Processing Systems},
  volume={37},
  pages={81773--81807},
  year={2024}
}

@article{zhang2024iterative,
  title={Iterative nash policy optimization: Aligning llms with general preferences via no-regret learning},
  author={Zhang, Yuheng and Yu, Dian and Peng, Baolin and Song, Linfeng and Tian, Ye and Huo, Mingyue and Jiang, Nan and Mi, Haitao and Yu, Dong},
  journal={arXiv preprint arXiv:2407.00617},
  year={2024}
}

@article{zhang2025improving,
  title={Improving LLM general preference alignment via optimistic online mirror descent},
  author={Zhang, Yuheng and Yu, Dian and Ge, Tao and Song, Linfeng and Zeng, Zhichen and Mi, Haitao and Jiang, Nan and Yu, Dong},
  journal={arXiv preprint arXiv:2502.16852},
  year={2025}
}

@article{jaech2024openai,
  title={Openai o1 system card},
  author={Jaech, Aaron and Kalai, Adam and Lerer, Adam and Richardson, Adam and El-Kishky, Ahmed and Low, Aiden and Helyar, Alec and Madry, Aleksander and Beutel, Alex and Carney, Alex and others},
  journal={arXiv preprint arXiv:2412.16720},
  year={2024}
}

@article{zheng2025group,
  title={Group sequence policy optimization},
  author={Zheng, Chujie and Liu, Shixuan and Li, Mingze and Chen, Xiong-Hui and Yu, Bowen and Gao, Chang and Dang, Kai and Liu, Yuqiong and Men, Rui and Yang, An and others},
  journal={arXiv preprint arXiv:2507.18071},
  year={2025}
}

@article{liu2025understanding,
  title={Understanding r1-zero-like training: A critical perspective},
  author={Liu, Zichen and Chen, Changyu and Li, Wenjun and Qi, Penghui and Pang, Tianyu and Du, Chao and Lee, Wee Sun and Lin, Min},
  journal={arXiv preprint arXiv:2503.20783},
  year={2025}
}

@inproceedings{sheng2025hybridflow,
  title={Hybridflow: A flexible and efficient rlhf framework},
  author={Sheng, Guangming and Zhang, Chi and Ye, Zilingfeng and Wu, Xibin and Zhang, Wang and Zhang, Ru and Peng, Yanghua and Lin, Haibin and Wu, Chuan},
  booktitle={Proceedings of the Twentieth European Conference on Computer Systems},
  pages={1279--1297},
  year={2025}
}

@article{precup2000eligibility,
  title={Eligibility traces for off-policy policy evaluation},
  author={Precup, Doina and Sutton, Richard S and Singh, Satinder},
  year={2000}
}

@article{greensmith2004variance,
  title={Variance reduction techniques for gradient estimates in reinforcement learning},
  author={Greensmith, Evan and Bartlett, Peter L and Baxter, Jonathan},
  journal={Journal of Machine Learning Research},
  volume={5},
  number={Nov},
  pages={1471--1530},
  year={2004}
}

@inproceedings{jiang2016doubly,
  title={Doubly robust off-policy value evaluation for reinforcement learning},
  author={Jiang, Nan and Li, Lihong},
  booktitle={International conference on machine learning},
  pages={652--661},
  year={2016},
  organization={PMLR}
}

@phdthesis{thomas2015safe,
  title={Safe reinforcement learning},
  author={Thomas, Philip S},
  year={2015},
  school={University of Massachusetts Libraries}
}

@article{gao2025soft,
  title={Soft adaptive policy optimization},
  author={Gao, Chang and Zheng, Chujie and Chen, Xiong-Hui and Dang, Kai and Liu, Shixuan and Yu, Bowen and Yang, An and Bai, Shuai and Zhou, Jingren and Lin, Junyang},
  journal={arXiv preprint arXiv:2511.20347},
  year={2025}
}

@article{xue2025simpletir,
  title={Simpletir: End-to-end reinforcement learning for multi-turn tool-integrated reasoning},
  author={Xue, Zhenghai and Zheng, Longtao and Liu, Qian and Li, Yingru and Zheng, Xiaosen and Ma, Zejun and An, Bo},
  journal={arXiv preprint arXiv:2509.02479},
  year={2025}
}

@article{yang2025qwen3,
  title={Qwen3 technical report},
  author={Yang, An and Li, Anfeng and Yang, Baosong and Zhang, Beichen and Hui, Binyuan and Zheng, Bo and Yu, Bowen and Gao, Chang and Huang, Chengen and Lv, Chenxu and others},
  journal={arXiv preprint arXiv:2505.09388},
  year={2025}
}

@misc{deepscaler2025,
  title={DeepScaleR: Surpassing O1-Preview with a 1.5B Model by Scaling RL},
  author={Michael Luo and Sijun Tan and Justin Wong and Xiaoxiang Shi and William Y. Tang and Manan Roongta and Colin Cai and Jeffrey Luo and Li Erran Li and Raluca Ada Popa and Ion Stoica},
  howpublished={\url{https://pretty-radio-b75.notion.site/DeepScaleR-Surpassing-O1-Preview-with-a-1-5B-Model-by-Scaling-RL-19681902c1468005bed8ca303013a4e2}},
  note={Notion Blog},
  year={2025}
}

@article{zhang2025ar3po,
  title={Improving sampling efficiency in rlvr through adaptive rollout and response reuse},
  author={Zhang, Yuheng and Yao, Wenlin and Yu, Changlong and Liu, Yao and Yin, Qingyu and Yin, Bing and Yun, Hyokun and Li, Lihong},
  journal={arXiv preprint arXiv:2509.25808},
  year={2025}
}
\bibliographystyle{plainnat}


\appendix
\section{Proofs for Section~\ref{sec:ctpo}}\label{app:proof}

\subsection{Proof for Proposition \ref{prop:unbiased}} \label{app:proof_1}

\begin{proof}
It suffices to show that each term in the sum is unbiased.
For position $t$, we write $A_t$ as shorthand for $A_t(s_t, a_t)$.
Since the transition dynamics are deterministic, $s_{t'}$ is fully 
determined by the prompt $x$ and the preceding actions $a_{1:t'-1}$ 
for all $t'$. Therefore the prefix likelihood factorizes as 
$\pi_\theta(a_{1:t} \mid x) = \prod_{t'=1}^{t} \pi_\theta(a_{t'} \mid s_{t'})$,
and similarly for $\pi_b$. Applying the change of measure to the prefix:
\begin{align}
    &\mathbb{E}_{\tau \sim \pi_b}\left[\rho_t^{\mathrm{cum}}\, A_t\, 
    \nabla_\theta \log \pi_\theta(a_t \mid s_t)\right] \notag \\
    &= \mathbb{E}_{\tau \sim \pi_b}\left[
    \frac{\pi_\theta(a_{1:t} \mid x)}{\pi_b(a_{1:t} \mid x)}\, 
    A_t\, \nabla_\theta \log \pi_\theta(a_t \mid s_t)\right] \notag \\
    &= \mathbb{E}_{a_{1:t} \sim \pi_\theta(\cdot \mid x)}\left[
    A_t\, \nabla_\theta \log \pi_\theta(a_t \mid s_t)\right], \nonumber
\end{align}
where the last step follows from the change of measure identity, 
and we have marginalized over the suffix $a_{t+1:H}$, 
which does not affect $A_t$ or $\nabla_\theta \log \pi_\theta(a_t \mid s_t)$ 
as both depend only on $(x, a_{1:t})$.
This equals $\mathbb{E}_{\tau \sim \pi_\theta}[A_t \nabla_\theta \log \pi_\theta(a_t \mid s_t)]$,
which is the correct policy gradient term at position $t$.
\end{proof}

\subsection{Proof for Proposition \ref{prop:variance}} \label{app:proof_2}

\begin{proof}
\textbf{(i)}
Decompose the sequence-level ratio as 
$\rho^{\mathrm{seq}} = \rho_t^{\mathrm{cum}} \cdot \epsilon_t$, 
where $\epsilon_t = \prod_{t'=t+1}^{H} r_{t'}$ is the suffix ratio.
By the likelihood ratio identity,
$\mathbb{E}_{\pi_b}[\epsilon_t \mid a_{1:t}] = 1$.
Applying the law of total variance:
\begin{align*}
    \mathrm{Var}(\rho^{\mathrm{seq}}) 
    &= \mathrm{Var}\!\left(\mathbb{E}[\rho_t^{\mathrm{cum}} \epsilon_t \mid a_{1:t}]\right) 
    + \mathbb{E}\!\left[\mathrm{Var}(\rho_t^{\mathrm{cum}} \epsilon_t \mid a_{1:t})\right] \\
    &= \mathrm{Var}(\rho_t^{\mathrm{cum}}) 
    + \mathbb{E}\!\left[(\rho_t^{\mathrm{cum}})^2 
    \mathrm{Var}(\epsilon_t \mid a_{1:t})\right],
\end{align*}
where the second equality uses the fact that $\rho_t^{\mathrm{cum}}$ is 
measurable with respect to $a_{1:t}$ and 
$\mathbb{E}[\epsilon_t \mid a_{1:t}]=1$.
If $\pi_\theta$ and $\pi_b$ differ at some future state reachable from 
$a_{1:t}$ with positive probability under $\pi_b$, then 
$\mathrm{Var}(\epsilon_t \mid a_{1:t})>0$. Therefore, if this occurs on 
a set of prefixes with positive probability, the second term is strictly 
positive, yielding 
$\mathrm{Var}(\rho^{\mathrm{seq}}) > \mathrm{Var}(\rho_t^{\mathrm{cum}})$.

\textbf{(ii)}
Under the independence assumption, the second moment factorizes as:
\begin{equation*}
    \mathbb{E}_{\pi_b}\left[(\rho_t^{\mathrm{cum}})^2\right] 
    = \prod_{t'=1}^{t} \mathbb{E}_{\pi_b}[r_{t'}^2].
\end{equation*}
For each position $t'$, we have
\begin{align*}
    \mathbb{E}_{\pi_b}[r_{t'}^2]
    &=
    \mathbb{E}_{\pi_b}
    \left[
    \sum_{a \in \mathcal{V}}
    \pi_b(a \mid s_{t'})
    \left(
    \frac{\pi_\theta(a \mid s_{t'})}{\pi_b(a \mid s_{t'})}
    \right)^2
    \right] \\
    &=
    1 +
    \mathbb{E}_{\pi_b}
    \left[
    \chi^2(\pi_\theta(\cdot \mid s_{t'}) \| \pi_b(\cdot \mid s_{t'}))
    \right]
    =
    1+\chi^2_{t'}.
\end{align*}
Thus,
\begin{equation*} 
\mathbb{E}_{\pi_b}\left[(\rho_t^{\mathrm{cum}})^2\right] = \prod_{t'=1}^{t} \left(1 + \chi^2_{t'}\right).
\end{equation*}
Since $\mathbb{E}_{\pi_b}[\rho_t^{\mathrm{cum}}]=1$ by the likelihood-ratio identity, subtracting $1$ yields
\[
    \mathrm{Var}(\rho_t^{\mathrm{cum}})
    =
    \prod_{t'=1}^{t} \left(1 + \chi^2_{t'}\right)-1.
\]
Applying the same argument to $\rho^{\mathrm{seq}}$ yields the stated 
expression for $\mathrm{Var}(\rho^{\mathrm{seq}})$.
\end{proof}


\end{document}